\documentclass[letterpaper, 10 pt, conference]{ieeeconf} 
\IEEEoverridecommandlockouts                              
\usepackage[utf8]{inputenc}
\usepackage[T1]{fontenc}
\usepackage{amsmath,bm}
\usepackage{amsfonts}
\usepackage{algorithm}
\usepackage{algorithmic}
\usepackage{amsthm}
\let\labelindent\relax  
\usepackage{enumitem}
\usepackage[english]{babel}
\usepackage{cite}

\usepackage[colorlinks=false, citebordercolor={1 1 1}, linkbordercolor={1 1 1}]{hyperref}
\usepackage{threeparttable} 
\usepackage{multirow}
\usepackage{graphicx}
\usepackage[font=footnotesize]{caption}
\usepackage{subcaption}
\usepackage{titlesec}
\usepackage{nth}
\usepackage{cleveref} 
\usepackage{lipsum}  
\usepackage[dvipsnames]{xcolor}
\usepackage{colortbl}
\definecolor{light-gray}{gray}{0.93}
\definecolor{semilight-gray}{gray}{0.8}

\usepackage{amssymb}  

\newtheoremstyle{defiStyle}  
  {3pt} {3pt} {\normalfont} {} {\bfseries\itshape} {:} { }  
  {\thmname{#1} \thmnumber{#2}\textit{\thmnote{ (#3)}}}  

\theoremstyle{defiStyle}  
\newtheorem{defi}{Definition}  

\theoremstyle{thmStyle}
\newtheorem{thm}{Theorem}  
\newtheorem{remark}{Remark}  
\newtheorem{prob}{Problem}

\title{\LARGE \bf
Long-Horizon Geometry-Aware Navigation among Polytopes \\ via MILP-MPC and Minkowski-Based CBFs
}

\author{Yi-Hsuan Chen$^{1}$, Salman Ghori$^{2}$, Ania Adil$^{2}$, Eric Feron$^{2}$, Calin Belta$^{3}$ 
\thanks{$^{1}$Y. Chen is with the Dept. of Aerospace Engineering, University of Maryland, College Park, MD, USA. {\tt yhchen91@umd.edu}}%
\thanks{$^{2}$S. Ghori, A. Adil, and E. Feron are with Computer, Electrical and Mathematical Science \& Engineering Division (CEMSE), KAUST, Saudi Arabia. {\tt \{salman.ghori, ania.adil, eric.feron\}@kaust.edu.sa}}%
\thanks{$^{3}$C. Belta is with the Dept. Electrical and Computer Engineering and Dept. Computer Science, University of Maryland, College Park, MD, USA. {\tt calin@umd.edu}}
}

\newcommand{\mdspace}{$\mathcal{M}_\mathcal{D}$-space}
\newcommand{\cobs}{CO}
\newcommand{\Aci}{\mathbf{A}_\mathcal{I}^c}
\newcommand{\lambdai}{\boldsymbol{\lambda}_{\mathcal{I}}}

\begin{document}
\maketitle
\begin{abstract}  
Autonomous navigation in complex, non-convex environments remains challenging when robot dynamics, control limits, and exact robot geometry must all be taken into account. In this paper, we propose a hierarchical planning and control framework that bridges long-horizon guidance and geometry-aware safety guarantees for a {\it polytopic} robot navigating among {\it polytopic} obstacles. At the high level, Mixed-Integer Linear Programming (MILP) is embedded within a Model Predictive Control (MPC) framework to generate a nominal trajectory around polytopic obstacles while modeling the robot as a point mass for computational tractability. At the low level, we employ a control barrier function (CBF) based on the exact signed distance in the Minkowski-difference space as a safety filter to explicitly enforce the geometric constraints of the robot shape, and further extend its formulation to a high-order CBF (HOCBF). We demonstrate the proposed framework in U-shaped and maze-like environments under single- and double-integrator dynamics. The results show that the proposed architecture mitigates the topology-induced local-minimum behavior of purely reactive CBF-based navigation while enabling safe, real-time, geometry-aware navigation. A video of all simulations can be found at \textnormal{\url{https://youtu.be/-cm4cQ_WXDY}}.

\end{abstract}

\section{Introduction}
\begin{figure}[ht!]
	\centering
    \includegraphics[width=0.932\linewidth]{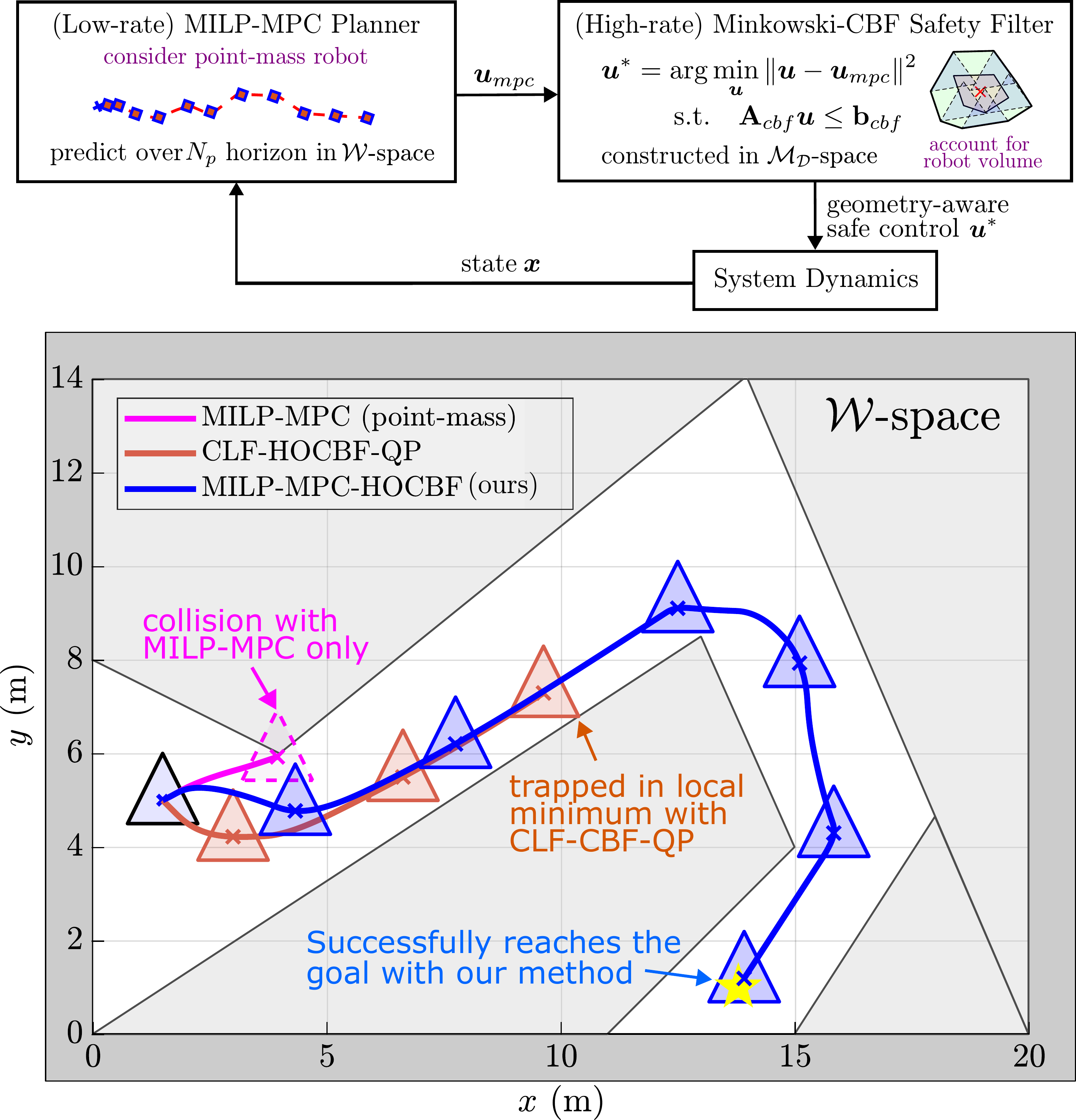}
	\caption{\textbf{Top:} The proposed multi-rate framework with a MILP-MPC planner and a geometry-aware Minkowski-CBF safety filter. \textbf{Bottom:} A triangular robot with double-integrator dynamics navigating in a polytopic maze-like environment. It collides with an obstacle when using MILP-MPC alone with a point-mass model (pink), and becomes trapped in a local minimum using a CLF-CBF-QP approach (orange) due to its inherent reactive nature. In contrast, the proposed framework (blue) successfully guides the robot to the goal {\it without a reference path or waypoints from an external planner}.}\label{fig: teaser}
\end{figure}

Autonomous navigation in cluttered, non-convex environments is a challenging problem when robot dynamics, control limits, and robot-obstacle geometry must all be considered simultaneously.
To make collision-avoidance constraints explicit and computationally tractable, many constrained planning and control methods approximate the robot as a point mass or a sphere, which we refer to as simplified geometric models \cite{Latombe.Planning.12,LaValle.Planning.06}. While computationally efficient, these approximations can be conservative and may rule out feasible motions that exist for the true robot geometry in tight environments.

To reduce this conservatism, recent work has considered tighter geometric representations such as polytopes. Optimization-based approaches have long considered polytopic constraints, particularly through mixed-integer formulations. Mixed-Integer Linear Programming (MILP) has been widely used to encode such constraints through binary variables and disjunctive formulations explicitly \cite{Schouwenaars.etal.ECC01,Richards.How.ACC02,Vitus.etal.GNC08,Lin.etal.TITS11,Deits.Tedrake.ICRA15}. More recently, an advanced framework of graphs of convex sets \cite{Marcucci.etal.SR23} provides more efficient formulations. However, for tractability, these methods are typically restricted to simplified geometric models. Alternatively, dualization techniques \cite{Zhang.OBCA.TCST20} have been proposed to handle both robot and obstacle geometry, but often lead to nonlinear programs that are computationally demanding and rely on good warm-starts. 

Control Barrier Functions (CBFs) \cite{Ames.etal.ECC19} and their generalization for high-order systems \cite{ Xiao.Calin.TAC21} offer
an efficient alternative for control-affine systems by rendering a safe set forward invariant through a sequence of pointwise Quadratic Programs (QPs) that can be solved on-the-fly. They have recently been extended to polytopic collision avoidance using both approximate \cite{Molnar.CCTA25,Usevitch.Sahleen.ACC25, Wu.etal.TCB25, Fernandez.etal.CASE25} and exact geometric \cite{Chen.etal.CDC25, Thirugnanam.etal.ACC22, Matias.Silvestre.arXiv26} representations. For reach-avoid tasks, CBFs are often paired with Control Lyapunov Functions (CLF), yielding the canonical CLF-CBF-QP framework \cite{Ames.etal.ECC19}. However, without waypoint guidance or a feasible reference path, such methods remain inherently reactive and may become trapped in local minima \cite{Reis.etal.LCSS20, Tan.Dimarogonas.Auto24}, especially in non-convex environments.


To mitigate this myopic behavior, we present a hierarchical navigation framework that combines a {\it long-horizon} planner with a {\it geometry-aware} safety filter. At the high level, a MILP is embedded within {an} MPC framework to generate a long-horizon nominal trajectory around non-convex polytopic obstacles, while using {\it simplified} geometric models for tractability. At the low level, a CBF based on the signed distance between a polytopic robot-obstacle pair in the Minkowski-difference space (\mdspace), termed Minkowski-CBF \cite{Chen.etal.CDC25}, serves as a safety filter that enforces the {\it exact} geometric constraints. The overall architecture is inspired by the design principles of multi-rate control \cite{Rosolia.Ames.LCSS21, Garg.etal.LCSS21,Matni.etal.CSM24} and run-time assurance \cite{Aiello.etal.10JGCD,Ghori.etal.DASC22,Hobbs.etal.CSM23}. 

Prior work has combined MPC and discrete CBFs (DCBFs) to integrate long-horizon prediction with safety-critical control \cite{Grandia.etal.ICRA21 ,Zeng.etal.ACC21, Liu.etal.ACC23}. Similar ideas have also been explored by combining Model Predictive Path Integral (MPPI) control with CBFs \cite{Yin.etal.RAL23, Rabiee.Hoagg.CDC25}. While most of these formulations do not explicitly account for robot volume, recent extensions to obstacle avoidance among polytopes are more closely related to our setting \cite{Thirugnanam.etal.ICRA22,Liu.etal.arXiv26}. Our work differs from \cite{Thirugnanam.etal.ICRA22,Liu.etal.arXiv26} in two key aspects: (1) Rather than embedding DCBF constraints directly into the MPC optimization problem, our method explicitly separates long-horizon planning from high-rate safety filtering. (2) Instead of relying on a reference path or waypoint sequence provided by an external global planner, our method uses the MILP-MPC layer itself to generate long-horizon guidance. Another recent work \cite{Yamaguchi.etal.arXiv26} also considers a layered CBF architecture that combines occupancy-map-based Poisson safety functions with a predictive safety layer and a real-time CBF-QP filter. In contrast, our framework is tailored to {\it exact-geometry navigation with polytopic sets}, combining an MILP-MPC planner with a Minkowski-CBF safety filter to reduce conservatism induced by simplified geometric models.

The contributions of this paper are threefold:
\begin{itemize}
    \item We propose a hierarchical multi-rate navigation framework, referred to as MILP-MPC-CBF, that combines a low-rate MILP-MPC planner with a high-rate geometry-aware  Minkowski-CBF safety filter.
    \item We derive a closed-form gradient expression of the Minkowski-CBF \cite{Chen.etal.CDC25} in the minimum-distance, pure-translation setting by exploiting the KKT structure of the underlying min-dist QP, and further extend the formulation to a HOCBF for double-integrator systems.
    \item We validate the proposed MILP-MPC-CBF framework in non-convex U-shaped and maze-like environments under both single- and double-integrator dynamics. Comparisons with the canonical CLF-CBF-QP approach show that the latter becomes trapped, whereas the proposed framework safely reaches the goal.
\end{itemize}
\section{Preliminaries}
In this section, we briefly review the background relevant to our Minkowski-CBF formulation, including configuration obstacles (\ref{subsec:prelim_CO}), HOCBF (\ref{subsec:prelim_cbf}), and differentiable optimization (\ref{subsec:prelim_diffopt}).

\subsection{Configuration Obstacles via Minkowski Operation}\label{subsec:prelim_CO}
Originating from computational geometry, Minkowski operations have been a fundamental tool for collision detection \cite{gjk,epa}, and were later adopted in motion planning to define the so-called ``Configuration Obstacle'' (see Def.~\ref{def:CO}) \cite{Lozano.TC83,Ericson}. By taking the Minkowski difference between an obstacle and the robot, their associated CO encodes the robot’s geometry and configuration with the obstacle in a single set. Consequently, the collision checking between two sets is reduced to a set membership problem.
\begin{defi} [\it Configuration Obstacle (CO) \cite{LaValle.Planning.06}] \label{def:CO}
\normalfont
    Given a robot $\mathcal{R}$ and an obstacle $\mathcal{O}$, the associated \cobs~is:
    \begin{align}
        \mathcal{O}^c = \mathcal{O}\oplus (-\mathcal{R})\label{eq: co},
    \end{align}
    where $-\mathcal{R}$ is the reflection of $\mathcal{R}$ through the origin and $\oplus$ denotes the Minkowski sum. 
\end{defi}

\subsection{High-Order Control Barrier Functions (HOCBF)}\label{subsec:prelim_cbf}
Consider a control-affine system 
:
\begin{align}
    \dot{\bm{x}}=f(\bm{x})+g(\bm{x})\bm{u},\label{eq: ctrl-affine}
\end{align}
where $\bm{x}\in \mathbb{R}^n$ is the state, $\bm{u}\in \mathcal{U}\subset\mathbb{R}^q$ is the input ($\mathcal{U}$ denotes a closed control constraint set), and $f:\mathbb{R}^n\rightarrow \mathbb{R}^n$, $g:\mathbb{R}^n\rightarrow \mathbb{R}^{n\times q}$ are locally Lipschitz.

\begin{defi}[\it Forward invariant set \cite{Ames.etal.ECC19}]
    \normalfont A set $C\subset\mathbb{R}^n$ is forward invariant for system \eqref{eq: ctrl-affine} if its solutions starting at $\bm{x}(t_0)\in C$ satisfy $\bm{x}(t)\in C,\,\,\forall t\geq t_0$.
\end{defi}


In this article, safety is guaranteed by the constraint $h(\bm{x})\geq0$; we refer to the relative degree\footnote{The relative degree of a differentiable function $h(\bm{x}):\mathbb{R}^n\rightarrow \mathbb{R}$ with respect to system \eqref{eq: ctrl-affine} is the number of times we have to differentiate it along the dynamics of \eqref{eq: ctrl-affine} until control $\bm{u}$ appears explicitly \cite{Khalil.nonlinear}.}
of $h$ as the relative degree of the constraint. For a function $h(\bm{x}):\mathbb{R}^n\rightarrow \mathbb{R}$ with relative degree $m$, we define a sequence of functions $\psi_i:\mathbb{R}^n\rightarrow\mathbb{R}$, $i\in\{1,\cdots,m\}$ as
\begin{align}
    \psi_i(\bm{x})=\dot{\psi}_{i-1}(\bm{x}) + \alpha
    _i(\psi_{i-1}(\bm{x})),\quad i\in\{1,\cdots,m\},\label{eq: psi_hocbf}
\end{align}
where each $\alpha_i$ is a class-$\mathcal{K}$ function\footnote{A class $\mathcal{K}$ function is a Lipschitz continuous function $\alpha:[0,a)\rightarrow [0,\infty),\, a>0$ that is strictly increasing and $\alpha(0)=0$, see, e.g., \cite{Khalil.nonlinear}.} and $\psi_0(\bm{x}):=h(\bm{x})$. We then define a sequence of sets $C_i$ associated with \eqref{eq: psi_hocbf}:
\begin{align}
    C_i=\{\bm{x} \in \mathbb{R}^n: \psi_{i-1}(\bm{x})\geq0 \},\quad i\in\{1,\cdots,m\}.\label{eq: Cs_hocbf}
\end{align}

\begin{defi}[\it HOCBF \cite{Xiao.Calin.TAC21}]\label{def: hocbf}
    \normalfont Let $C_1,\cdots,C_m$ be defined in \eqref{eq: Cs_hocbf} and $\psi_1,\cdots,\psi_m$ be defined in \eqref{eq: psi_hocbf}. A function $h:\mathbb{R}^n\rightarrow\mathbb{R}$ is a High-Order Control Barrier Function (HOCBF) of relative degree $m$ for system \eqref{eq: ctrl-affine} if there exist differentiable class-$\mathcal{K}$ functions $\alpha_i,\,\,i\in\{1,\cdots,m\}$ such that
    \begin{align}\begin{split}
    \textstyle\sup_{\bm{u}\in\mathcal{U}}      [L_f^mh(\bm{x})+L_gL_f^{m-1}h(\bm{x})\bm{u}+O(h(\bm{x}))\\+\alpha_m(\psi_{m-1}(\bm{x}))]\geq0,\quad \forall\bm{x}\in C_1\cap\cdots\cap C_m,
    \label{eq: def_hocbf_constr}
    \end{split}\end{align}
    where $L_f, L_g$ denote the Lie derivatives along $f$ and $g$, respectively, $L_f^m$ denotes $m$-th Lie derivative along $f$ and $O(h(\bm{x}))=\sum_{i=1}^{m-1} L_f^i(\alpha_{m-i} \circ \psi_{m-i-1})(\bm{x})$. It is assumed that $L_gL_f^{m-1}h(\bm{x})\neq 0$ on the boundary of the set $ C_1\cap\cdots\cap C_m$.
    
\end{defi}


\begin{thm}[Safety Guarantee \cite{Xiao.Calin.TAC21}]
    \normalfont Given a HOCBF $h(\bm{x})$ defined by \eqref{def: hocbf} with the associated sets $C_1,\cdots,C_m$ defined by \eqref{eq: Cs_hocbf}, if $\bm{x}(0)\in C_1\cap\cdots\cap C_m$, then any Lipschitz continuous controller $\bm{u}(t)\in \mathcal{U}$ that satisfies \eqref{eq: def_hocbf_constr}, $\forall t\geq 0$ renders $C_1\cap\cdots\cap C_m$ forward invariant for system \eqref{eq: ctrl-affine}.
\end{thm}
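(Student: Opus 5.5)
The plan is a backward induction on the chain of functions $\psi_{m-1},\psi_{m-2},\dots,\psi_0=h$. Each step uses a scalar comparison argument: if $\dot{\psi}_{i-1}+\alpha_i(\psi_{i-1})\ge 0$ along the trajectory and $\psi_{i-1}(\bm{x}(0))\ge 0$, then $\psi_{i-1}(\bm{x}(t))\ge 0$ for all $t\ge0$.

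\textbf{Step 1: rewrite the constraint.} Under the closed-loop dynamics with a Lipschitz controller $\bm{u}(t)$, the solution $\bm{x}(t)$ exists and is unique, since $f,g$ are locally Lipschitz. Expanding the recursion \eqref{eq: psi_hocbf} and using the definition of relative degree, one checks that
\[
\psi_m(\bm{x},\bm{u}) = L_f^mh(\bm{x})+L_gL_f^{m-1}h(\bm{x})\bm{u}+O(h(\bm{x}))+\alpha_m(\psi_{m-1}(\bm{x})).
\]
The point is that $\bm{u}$ cannot appear in $\dot\psi_{i}$ for $i<m-1$, because the $\alpha_i$ are differentiable and the relative degree is $m$. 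This bookkeeping is the main technical step. It should be verified by induction on $i$, showing that $\psi_i$ depends only on $\bm{x}$ and involves Lie derivatives of order at most $i$ along $f$. Consequently, condition \eqref{eq: def_hocbf_constr} holding along the trajectory means exactly $\dot\psi_{m-1}(\bm{x}(t))+\alpha_m(\psi_{m-1}(\bm{x}(t)))\ge0$.

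\textbf{Step 2: the comparison lemma.} Let $y(t)=\psi_{m-1}(\bm{x}(t))$, so that $y(0)\ge0$ because $\bm{x}(0)\in C_m$. Suppose for contradiction that $y(t_1)<0$ for some $t_1$. Define $t^*=\sup\{t<t_1:y(t)\ge0\}$. By continuity, $y(t^*)=0$ and $y<0$ on $(t^*,t_1]$. On that interval, extend $\alpha_m$ to negative arguments by any extended class-$\mathcal K$ function; the argument needs only that $-\alpha_m(y)$ is locally Lipschitz and vanishes at $y=0$. Then $\dot y\ge-\alpha_m(y)$. The comparison lemma, with the unique solution $z\equiv0$ of $\dot z=-\alpha_m(z)$, $z(t^*)=0$, gives $y(t)\ge 0$ on $[t^*,t_1]$, a contradiction.

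An alternative that avoids extending $\alpha_m$: note that on $C_1\cap\cdots\cap C_m$ the constraint yields $\dot y\ge -\alpha_m(y)\ge -Ly$ near $y=0$, by Lipschitz continuity of $\alpha_m$. Gronwall's inequality then prevents $y$ from leaving zero downward. I expect the delicate point to be exactly this step: justifying the inequality at the boundary when the constraint is only assumed on the intersection of the sets.

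\textbf{Step 3: propagate downward.} Once $\psi_{m-1}(\bm{x}(t))\ge0$ for all $t$, the definition $\psi_{m-1}=\dot\psi_{m-2}+\alpha_{m-1}(\psi_{m-2})$ gives $\dot\psi_{m-2}+\alpha_{m-1}(\psi_{m-2})\ge0$. Since $\bm{x}(0)\in C_{m-1}$, the same argument as in Step 2 yields $\psi_{m-2}(\bm{x}(t))\ge0$. Repeating down to $\psi_0=h$ shows $\bm{x}(t)\in C_i$ for every $i$ and all $t\ge0$. Hence $C_1\cap\cdots\cap C_m$ is forward invariant.

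To keep the argument consistent, the downward chain is carried out simultaneously using a first-exit-time argument. Let $T$ be the first time any $\psi_i$ becomes negative. On $[0,T]$ the trajectory stays in the intersection, so the constraint applies there, and Steps 2–3 applied on $[0,T+\epsilon)$ contradict the definition of $T$.
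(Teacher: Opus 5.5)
Your Steps 1--3 follow the standard route. The paper gives no proof of its own and imports the result from the cited HOCBF reference, whose argument is exactly this backward chain $\psi_m\ge0\Rightarrow\psi_{m-1}\ge0\Rightarrow\cdots\Rightarrow\psi_0=h\ge0$ via a scalar comparison lemma. Your relative-degree bookkeeping is also right: $L_g\psi_i$ only involves $L_gL_f^{l}h$ with $l\le i$, and these vanish for $i\le m-2$.

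The part that does not hold up is how you handle the boundary. Your Gronwall alternative and your closing first-exit-time paragraph both treat the constraint as available only while $\bm{x}(t)\in C_1\cap\cdots\cap C_m$, and then try to show that the trajectory cannot leave. That is circular. By the definition of $T$, the state is outside the intersection at times arbitrarily close to $T$ in $(T,T+\epsilon)$, so you have no inequality there. ``Applying Steps 2--3 on $[0,T+\epsilon)$'' therefore presupposes what is to be proved. Likewise, a Gronwall bound valid only while $y\ge0$ says nothing about $y$ crossing zero. No argument of this type can succeed. Take $m=1$, $\dot x=u$, $h(x)=x$, $x(0)=0$ and the Lipschitz input $u(t)=-t$. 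The constraint $u+\alpha_1(h)\ge0$ holds at $t=0$, the only instant at which the state lies in $C_1$, yet $x(t)=-t^2/2<0$ for every $t>0$.

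What actually closes Step 2 is the hypothesis exactly as stated: \eqref{eq: def_hocbf_constr} holds for \emph{all} $t\ge0$, including instants where $\psi_{m-1}(\bm{x}(t))$ might a priori be negative. For this to make sense, the $\alpha_i$ must be extended class-$\mathcal{K}$ functions, defined and locally Lipschitz on a neighborhood of $0$ in $\mathbb{R}$. This is in any case implicit in the declaration $\psi_i:\mathbb{R}^n\to\mathbb{R}$. It also corrects your phrase ``extend $\alpha_m$ by any extended class-$\mathcal{K}$ function.'' The inequality $\dot y\ge-\alpha_m(y)$ at negative $y$ is part of the hypothesis, so the extension is the one fixed in the data, not one you are free to choose afterwards.

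With this reading, your Step 2 applies verbatim on $[t^*,t_1]$. Step 3 then goes through because $\dot\psi_{i-1}+\alpha_i(\psi_{i-1})=\psi_i\ge0$ holds by definition once $\psi_i\ge0$ is established, with the same $\alpha_i$ used to define $\psi_i$. The Gronwall variant and the exit-time paragraph should simply be removed.
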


\subsection{Differentiable Optimization}\label{subsec:prelim_diffopt}
Recently, differentiable optimization has been increasingly adopted in robotics for collision avoidance, to compute the gradients of CBFs defined via optimization \cite{Dai.etal.RAL23, Chen.etal.CDC25, Fernandez.etal.RAL26}. Consider a parameteric inequality-constrained QP:
\begin{align}\begin{split}
    &\mathbf{z}^*=\arg\min_{\mathbf{z}} f_o(\mathbf{z}),\quad\text{s.t.}\,\,\, \mathbf{G}(\mathbf{x})\mathbf{z}\leq \mathbf{h}(\mathbf{x}),\\
    &\text{with} \,\,\,\,f_o(\mathbf{z})=\mathbf{z}^\top \mathbf{Q}\mathbf{z}+\mathbf{p}^\top\mathbf{z},\quad\mathbf{Q}\in\mathbb{S}_{++}^k,\,\,\mathbf{p}\in\mathbb{R}^k
    \label{eq: diff-qp}
\end{split}\end{align}
where $\mathbf{z}\in\mathbb{R}^k$ is the optimization variable, $\mathbf{x}\in\mathbb{R}^p$ is the parameter, 
and $\mathbf{G}(\mathbf{x})\in\mathbb{R}^{\mathfrak{g}\times k}$, $\mathbf{h}(\mathbf{x})\in\mathbb{R}^\mathfrak{g}$ define the parameter-dependent 
inequality constraints. The Lagrangian of \eqref{eq: diff-qp} is
\begin{align}
    \mathcal{L}(\mathbf{z},\boldsymbol{\lambda},\mathbf{x}):= f_o(\mathbf{z})+ \boldsymbol{\lambda}^\top (\mathbf{G}(\mathbf{x})\mathbf{z}-\mathbf{h}(\mathbf{x})), \label{eq: L}
\end{align}
where $\boldsymbol{\lambda}\in\mathbb{R}^\mathfrak{g}$ is the dual variable. The optimal primal-dual pair $(\mathbf{z}^*(\mathbf{x}), \boldsymbol{\lambda}^*(\mathbf{x}))$ satisfies the following compact Karush-Kuhn-Tucker (KKT) system\footnote{Assuming linear independence constraint qualification (LICQ) holds.
}: 
\begin{align}
    \Gamma(\mathbf{z},\boldsymbol{\lambda},\mathbf{x}):=\begin{bmatrix}
    2\mathbf{Q}\mathbf{z}+\mathbf{p} + \mathbf{G}(\mathbf{x})^\top\boldsymbol{\lambda}
 \\D(\boldsymbol{\lambda})(\mathbf{G}(\mathbf{x})\mathbf{z}-\mathbf{h}(\mathbf{x}))
\end{bmatrix}=\bm{0},
\label{eq: KKT_cond}
\end{align}
where the first and second rows correspond to the stationarity and complementary slackness conditions, respectively. Here, $D(\cdot)$ creates a diagonal matrix from a vector; for brevity, we omit the dependence on $\mathbf{x}$ when clear from the context.
The sensitivity of the optimal value of the objective function $f_0(\mathbf{z}^*)$ is characterized by Theorem.~\ref{Thm:dfdx} from \cite{Castillo.etal.EngOpt6}.
We will later use this result to derive the gradient of our Minkowski-CBF (detailed in Sec.~\ref{sec: M-cbf}), as in \cite{Fernandez.etal.RAL26}.

\begin{thm}[Sensitivity of the optimal value of the objective function with respect to the parameter \cite{Castillo.etal.EngOpt6}]
    \normalfont The sensitivity of the optimal value of the objective function $f_0(\mathbf{z}^*)$ of \eqref{eq: diff-qp} with respect to the parameter $\mathbf{x}$ is given by the partial derivative of its Lagrangian function \eqref{eq: L} with respect to $\mathbf{x}$, evaluated at the optimal primal-dual pair $(\mathbf{z}^*, \boldsymbol{\lambda}^*)$, i.e., 
    $\nabla_{\mathbf{x}}f_0(\mathbf{z}^*(\mathbf{x}))= \frac{\partial \mathcal{L} }{\partial \mathbf{x}}\big\vert_{(\mathbf{z}^*,\boldsymbol{\lambda}^*)}$.
    \label{Thm:dfdx}
\end{thm}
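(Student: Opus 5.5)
We prove Theorem~\ref{Thm:dfdx} with the standard envelope-theorem argument, applied to the parametric QP \eqref{eq: diff-qp}.

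\textbf{Differentiability of the solution map.} Fix $\mathbf{x}$ and let $(\mathbf{z}^*,\boldsymbol{\lambda}^*)$ satisfy the KKT system \eqref{eq: KKT_cond}. Because $\mathbf{Q}\in\mathbb{S}_{++}^k$, the problem is strictly convex, so $\mathbf{z}^*(\mathbf{x})$ is unique. LICQ, which is assumed in the footnote, makes $\boldsymbol{\lambda}^*(\mathbf{x})$ unique as well. We also assume strict complementarity, i.e., $\lambda_i^*>0$ for every active constraint; this is the generic situation, and it is needed for the derivative to exist rather than only one-sided derivatives. Under these conditions the Jacobian of $\Gamma$ with respect to $(\mathbf{z},\boldsymbol{\lambda})$ is nonsingular: restricted to the active set it is the saddle-point matrix built from $2\mathbf{Q}$ and the full-row-rank active block $\mathbf{G}_\mathcal{A}$, and the inactive rows have nonzero diagonal entries $\mathbf{G}_i\mathbf{z}^*-h_i<0$. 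The implicit function theorem therefore gives that $\mathbf{z}^*(\mathbf{x})$ and $\boldsymbol{\lambda}^*(\mathbf{x})$ are differentiable near $\mathbf{x}$, provided $\mathbf{G}$ and $\mathbf{h}$ are differentiable in $\mathbf{x}$. This is the step I expect to be the main obstacle, because it is the only place where regularity assumptions enter. If strict complementarity fails, I would fall back on Danskin-type directional-derivative arguments instead.

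\textbf{Chain rule and stationarity.} Along the solution path, complementary slackness gives $f_o(\mathbf{z}^*)=\mathcal{L}(\mathbf{z}^*(\mathbf{x}),\boldsymbol{\lambda}^*(\mathbf{x}),\mathbf{x})$. Differentiating with the chain rule,
\begin{align*}
\nabla_{\mathbf{x}} f_o(\mathbf{z}^*)=\Big(\tfrac{\partial\mathbf{z}^*}{\partial\mathbf{x}}\Big)^{\!\top}\nabla_{\mathbf{z}}\mathcal{L}+\Big(\tfrac{\partial\boldsymbol{\lambda}^*}{\partial\mathbf{x}}\Big)^{\!\top}\nabla_{\boldsymbol{\lambda}}\mathcal{L}+\tfrac{\partial\mathcal{L}}{\partial\mathbf{x}}.
\end{align*}
The first term vanishes because of the stationarity row of \eqref{eq: KKT_cond}.

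\textbf{The dual term.} We have $\nabla_{\boldsymbol{\lambda}}\mathcal{L}=\mathbf{G}\mathbf{z}^*-\mathbf{h}$, so the second term is $\sum_i (\partial\lambda_i^*/\partial\mathbf{x})(\mathbf{G}_i\mathbf{z}^*-h_i)$. We show each summand is zero by splitting into two cases:
\begin{itemize}
\item For an active constraint, the factor $\mathbf{G}_i\mathbf{z}^*-h_i$ is zero.
\item For an inactive constraint, $\lambda_i^*$ is identically zero on a neighborhood of $\mathbf{x}$, by continuity of $\mathbf{z}^*$ and complementary slackness. Hence $\partial\lambda_i^*/\partial\mathbf{x}=0$.
\end{itemize}

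\textbf{Conclusion.} Only the explicit partial derivative remains:
\begin{align*}
\nabla_{\mathbf{x}}f_o(\mathbf{z}^*)=\frac{\partial\mathcal{L}}{\partial\mathbf{x}}\Big|_{(\mathbf{z}^*,\boldsymbol{\lambda}^*)}=\Big(\tfrac{\partial\mathbf{G}}{\partial\mathbf{x}}\Big)^{\!\top}\!\!\cdot\,(\boldsymbol{\lambda}^*\mathbf{z}^{*\top})-\Big(\tfrac{\partial\mathbf{h}}{\partial\mathbf{x}}\Big)^{\!\top}\boldsymbol{\lambda}^*,
\end{align*}
where the first term is understood componentwise, with $\partial\mathbf{G}/\partial\mathbf{x}$ a third-order tensor contracted against $\boldsymbol{\lambda}^*\mathbf{z}^{*\top}$. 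This is the claimed identity.
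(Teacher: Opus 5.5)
\textbf{Comparison with the paper.} The paper gives no proof of this statement. It imports the identity from Castillo et al.\ and uses it as a black box in Theorem~\ref{thm: cbf_grad}. Your envelope argument is correct, under the hypotheses you state. You differentiate $f_o(\mathbf{z}^*)=\mathcal{L}(\mathbf{z}^*,\boldsymbol{\lambda}^*,\mathbf{x})$, remove the $\mathbf{z}$-term by stationarity, and remove the $\boldsymbol{\lambda}$-term by the active/inactive split. This is the standard KKT-perturbation reasoning behind the cited result.

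What your version adds is that hypotheses the paper leaves implicit become explicit: $C^1$ dependence of $\mathbf{G},\mathbf{h}$ on $\mathbf{x}$, LICQ, and strict complementarity. The nonsingular KKT Jacobian you invoke is also the same one the paper later inverts in the proof of Theorem~\ref{thm: cbf_hessian}.

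\textbf{A missing line.} After the implicit function theorem, add one step. The theorem only yields a smooth branch of solutions of $\Gamma=\bm{0}$. By continuity and $\lambda_i^*>0$ on the active set, this branch keeps the same active set and stays primal and dual feasible. It is therefore a KKT pair of a convex problem, hence equal to the unique optimum.

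\textbf{Strict complementarity is stronger than needed.} It is what makes the solution map differentiable, not the optimal value, and the theorem only asserts something about the optimal value. For example, take $\min z^2$ s.t.\ $z\ge x$. Then $z^*=\max(x,0)$ is not differentiable at $x=0$, yet $f_o(z^*)=\max(x,0)^2$ is, with derivative $0=\lambda^*=\partial_x\mathcal{L}$.

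This matters for the paper, because \eqref{eq: cbf_grad} is asserted for every $h>0$. That includes active-set switching points (a CO vertex with one zero multiplier), which your hypotheses exclude.

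\textbf{Covering the deferred case.} It needs no Danskin machinery. Set $\phi(\mathbf{x}'):=f_o(\mathbf{z}^*(\mathbf{x}'))$.
\begin{itemize}
\item Lower bound: weak duality gives $\phi(\mathbf{x}')\ge\min_{\mathbf{z}}\mathcal{L}(\mathbf{z},\boldsymbol{\lambda}^*,\mathbf{x}')$. The right-hand side is smooth, since its minimizer is explicit when $\mathbf{Q}\succ 0$. It equals $\phi$ at $\mathbf{x}$ by complementary slackness, and its gradient there is $\partial_{\mathbf{x}}\mathcal{L}$.
\item Upper bound: LICQ lets you restore all constraints active at $\mathbf{x}$ as equalities with a correction $\boldsymbol{\delta}=O(\|\mathbf{x}'-\mathbf{x}\|)$; inactive constraints remain strict nearby. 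Expanding $f_o(\mathbf{z}^*+\boldsymbol{\delta})$ with stationarity gives $\phi(\mathbf{x}')\le\phi(\mathbf{x})+\partial_{\mathbf{x}}\mathcal{L}\,(\mathbf{x}'-\mathbf{x})+o(\|\mathbf{x}'-\mathbf{x}\|)$.
\end{itemize}
This sandwich proves the statement under LICQ and $C^1$ data alone, without differentiating $\mathbf{z}^*$ or $\boldsymbol{\lambda}^*$.
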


\section{Problem Formulation}
In this paper, $\bm{x}$ denotes the robot state, and $\bm{p}\in\mathbb{R}^d$ denotes its position component. 
We adopt the same geometric assumptions for the robot, obstacles, and the associated COs as in \cite{Chen.etal.CDC25}, and recall only the half-space representations needed here. In particular, we assume that the robot $\mathcal{R}$ is convex and that each obstacle $\mathcal{O}_i$ is either convex or decomposed into convex components. Its corresponding CO, denoted as $\mathcal{O}_i^c$, is then used
for our pairwise CBF construction.
\begin{subequations}
    \begin{align}
        &\mathcal{R}(\bm{x})= \{\mathbf{y}\in \mathbb{R}^{d}\mid A_r(\bm{x}) \mathbf{y}\leq b_r(\bm{x})\}, \\
        &\mathcal{O}_i= \{\mathbf{y}\in \mathbb{R}^{d}\mid A_{o_i} \mathbf{y}\leq b_{o_i}\},\quad i=1,\ldots,N_\mathcal{O}
        \label{eq: obs-w}\\
        &\mathcal{O}_i^c(\bm{x})=\{\mathbf{y}\in \mathbb{R}^{d}\mid \mathbf{A}^c_i(\bm{x}) \mathbf{y}\leq \mathbf{b}^c_i(\bm{x})\},\label{eq: obs-md}
    \end{align}
\end{subequations}
with $A_r\in\mathbb{R}^{\ell_r\times d},~b_r\in\mathbb{R}^{\ell_r}$,  $A_{o_i}\in\mathbb{R}^{\ell_{o_i}\times d},~b_{o_i}\in\mathbb{R}^{\ell_{o_i}}$, $\mathbf{A}_i^c\in\mathbb{R}^{\ell_{c_i}\times d},~\mathbf{b}^c_i\in\mathbb{R}^{\ell_{c_i}}$. $N_\mathcal{O}$ is the number of convex obstacle components, and $d\in\{2,3\}$ for 2D and 3D spaces. $\ell_r$, $\ell_{o_i}$ and $\ell_{c_i}$ denote the number of edges of the robot and the $i$-th obstacle and CO, respectively. We suppress the dependence on $\bm{x}$ for brevity.

In general, a CO depends on both the robot's position and orientation. However, since our high-level planner uses an MILP formulation for linear systems only, the resulting robot's motion is purely translational; therefore, the robot orientation remains fixed, and so do the corresponding COs.
 
\begin{prob}
    Consider a workspace $\mathcal{X}$, obstacle space $\mathcal{X}_\mathcal{O}= \bigcup_{i=1}^{N_\mathcal{O}}\mathcal{O}_i\subseteq\mathcal{X}$ 
    , free space  $\mathcal{X}_\text{free}=\mathcal{X}\setminus \mathcal{X}_\mathcal{O}$, and a polytopic robot $\mathcal{R}$ with dynamics $\dot{\bm{x}}=A\bm{x}+B\bm{u}$. Our goal is to generate a feedback control law $\bm{u}^*(\bm{x})$ driving the robot from a given initial state $\bm{x}_s$ to a goal point $\bm{x}_g\in\mathcal{X}_\text{free}$, while ensuring $\mathcal{R}(\bm{x}(t))\cap\mathcal{X}_\mathcal{O}=\emptyset$ for all $t\geq0$.
\end{prob}

\begin{remark}
    Applying the MILP-MPC planner to nonlinear systems would require linearizing along a nominal trajectory. Since the planner operates at a low rate, the resulting linearization error must be treated carefully over the planning horizon, which is beyond the scope of the present work.
\end{remark}
Next we present the proposed hierarchical multi-rate planning architecture, which embeds a high-level MILP-MPC planner for long-horizon guidance with a low-level Minkowski-CBF safety filter for geometry-aware collision avoidance without geometric approximations.

\section{Multi-rate Planning Architecture}
The canonical CLF-CBF-QP framework optimizes the control input only one step ahead. Without a look-ahead horizon, the robot may become trapped in local minima and stall in non-convex environments. Motivated by this limitation, we propose a hierarchical multi-rate architecture, termed MILP-MPC-CBF, that combines long-horizon planning with geometry-aware safety filtering. Specifically, the proposed architecture consists of:

\begin{itemize}
  \item a \textbf{low-rate MILP-MPC planner} that generates a long-horizon nominal control sequence using a {\it point-robot} abstraction for computational tractability; and
  \item a \textbf{high-rate Minkowski-CBF safety filter} that tracks the nominal command while enforcing collision avoidance with respect to the {\it full robot geometry}.
\end{itemize}

\subsection{High-level MPC planner via MILP}
At each planning step $k$, the MILP-MPC  planner solves a finite-horizon optimal control problem (FHOCP) over a prediction horizon of $N_p$ steps with the sampling interval $\Delta t_p$, resulting in a prediction horizon of duration $T_p = N_p \Delta t_p$. The FHOCP is initialized at the current state, i.e.,  $\bm{x}_0=\bm{x}_k$. The decision variables consist of both continuous and binary variables. Let $\mathbf{X}$ and $\mathbf{U}$ denote the predicted state and control trajectories, respectively:
\begin{align*}
    \mathbf{X} &:= \{\bm{x}_i\}_{i=0}^{N_p}
    = [\bm{x}_0,\bm{x}_{1},\ldots,\bm{x}_{N_p}]
    \in \mathbb{R}^{n\times (N_p+1)},\\[0.2em]
    \mathbf{U} &:= \{\bm{u}_i\}_{i=0}^{N_p-1}
    = [\bm{u}_0,\bm{u}_1,\ldots,\bm{u}_{N_p-1}]
    \in \mathbb{R}^{m\times N_p}.
\end{align*}
To encode polytopic obstacle avoidance over the prediction horizon $N_p$, let $\ell_{\mathcal{O}_j}$ denote the number of half-space inequalities defining obstacle $\mathcal{O}_j,~\forall j\in\{1,\dots,N_{\mathcal O}\}$. For each obstacle $j$ at each prediction step $i$, we introduce a binary vector $\bm{t}_{i}^j$:
\begin{align}
    \bm{t}_{i}^j :=
    \begin{bmatrix}
        t_{i,1}^j,\ldots,t_{i,\ell_{\mathcal{O}_j}}^j
    \end{bmatrix}^{\top}
    \in \{0,1\}^{\ell_{\mathcal{O}_j}}.
\end{align}
These vectors are then collected in $\mathbf{T}$:
\begin{align*}
    &\mathbf{T} := \{ \bm{t}_{i}^j \}_{i=0,\ldots
    ,N_p}^{j=1,\ldots
    ,N_{\mathcal O}}.
\end{align*}
The vector $\bm{t}_{i}^j$
is used to encode the disjunctive half-space constraints associated with obstacle $\mathcal{O}_j$ in $\mathcal{W}$-space. 
For each prediction step $i=0,\ldots,N_p$ and each obstacle
$j=1,\ldots,N_{\mathcal O}$, the corresponding obstacle-avoidance constraints are imposed using the following ``Big-M'' formulation:
\begin{subequations}\label{eq:obs_bigM}
\begin{align}
-A_{\mathcal O_j,r}\bm p_i
&\le -b_{\mathcal O_j,r}-\varepsilon_{\mathrm{obs}} + M t_{i,r}^j,
\label{eq:obs_bigM_a}\\
&\qquad r=1,\ldots,\ell_{\mathcal{O}_j}, \notag\\
\sum_{r=1}^{\ell_{\mathcal{O}_j}} t_{i,r}^j
&\le \ell_{\mathcal{O}_j}-1,
\label{eq:obs_bigM_b}
\end{align}
\end{subequations}
where $\bm{p}_i$ is the position at each step $i$, $M$ is a sufficiently large constant and $\varepsilon_{\mathrm{obs}}>0$ is a user-defined safety margin. Constraint \eqref{eq:obs_bigM_b} ensures that at least one entry of $\bm{t}_i^j$ is zero, so at least one corresponding inequality in \eqref{eq:obs_bigM_a} remains active, which enforces $\bm{p}_i$ to lie outside of the obstacle $\mathcal{O}_j$. The resulting MILP-MPC FHOCP is formulated as
\begin{subequations}\label{eq: MILP-MPC}
\begin{align}
    \min_{\mathbf{X},\mathbf{U},\mathbf{T}} &\sum_{i=0}^{N_p-1}\left(\|\bm{u}_i\|_1 + \beta \|\bm{x}_i-\bm{x}_g\|_1\right) + \alpha \|\bm{x}_{N_p}-\bm{x}_g\|_1 \label{eq:mpc_obj}
    \\
    \text{s.t.}\quad &\bm{x}_0 = \bm{x}_k \label{eq:mpc_init} 
    \\
    &\bm{x}_{i+1} = A\bm{x}_i + B\bm{u}_i,\quad i=0,\ldots,N_p-1, \label{eq:mpc_dyn}
    \\
    &\bm{x}_{\min} \le \bm{x}_i \le \bm{x}_{\max}, \quad i=0,\ldots,N_p, \label{eq:mpc_state_bounds}
    \\
    &\bm{u}_{\min} \le \bm{u}_i \le \bm{u}_{\max}, \quad i=0,\ldots,N_p-1. \label{eq:mpc_input_bounds}\\
    &\text{obstacle-avoidance constraints in \eqref{eq:obs_bigM_a}--\eqref{eq:obs_bigM_b}},\notag
    \end{align}
\end{subequations}
where $\alpha$ and $\beta$ are positive user-defined weighting parameters that penalize the terminal and running deviations from the goal, respectively. Solving \eqref{eq: MILP-MPC} at each planning step $k$ yields an optimal control sequence $\mathbf{U}^*=[\bm{u}_0^*,\bm{u}_1^*,\ldots,\bm{u}^*_{N_p-1}]$. As in standard receding-horizon MPC, only the first input $\bm{u}_0^*$ is extracted. In our framework, this input is interpreted as the nominal guidance command, denoted by $\bm{u}_{\mathrm{mpc}}:=\bm{u}_0^*$, and passed to the low-level Minkowski-CBF safety filter (detailed in Sec.~\ref{sec: M-cbf}) for geometry-aware collision avoidance. Specifically, the low-level safety filter synthesizes the control input by solving the following min-norm CBF-QP, which minimally modifies $\bm{u}_{\mathrm{mpc}}$ while enforcing the input and safety constraints \eqref{eq: def_hocbf_constr}:
\begin{align}\begin{split}
    \bm{u}^*(\bm{x})&=\arg\min_{\bm{u}}\|\bm{u}-\bm{u}_{\mathrm{mpc}}\|^2_2
    \quad \text{s.t.}\,\,\bm{u}\in\mathcal{U}\,\,\text{and}\,\,\eqref{eq: def_hocbf_constr}.
    \label{eq: cbf-min-norm}
\end{split}\end{align}
The FHOCP \eqref{eq: MILP-MPC} is then re-solved at the next planning step using the updated state. An overview of the proposed framework is summarized in Algorithm.~\ref{alg1}.

\begin{algorithm}[h]
\color{black}
\caption{Hierarchical, multi-rate MILP-MPC-CBF}
\begin{algorithmic}[1]
\STATE Initialize $\bm{x} \leftarrow \bm{x}_0$, $t\leftarrow 0$, $k\leftarrow 0$, $\bm{u}_{\mathrm{mpc}}\leftarrow 0$.
\STATE Set $N_{\mathrm{MPC}}\leftarrow round(\Delta t_p/\Delta t)$.
\WHILE{$t \le t_{\mathrm{final}}$}
  \IF{$k = 0$ or $\bmod(k, N_{\mathrm{MPC}}) = 0$}
    \STATE Solve MILP-MPC \eqref{eq: MILP-MPC} with horizon $N_p$ and planner sampling step $\Delta t_{\mathrm{p}}$
    \IF{MILP feasible}
      \STATE $\bm{u}_{\mathrm{mpc}} \leftarrow \bm{u}_0^\star$.
    \ELSE
      \STATE $\bm{u}_{\mathrm{mpc}} \leftarrow$ greedy direction to goal
    \ENDIF
  \ENDIF
  \STATE Solve the min-dist QP \eqref{eq: dist_c} and compute the CBF value and gradient using the minimizer $\bm{z}^*$ and \eqref{eq: cbf_grad}.
  \IF{single-integrator}
     \STATE Construct the CBF constraint \eqref{eq: cbf_constr}.
  \ELSIF{double-integrator}
    \STATE Compute the Hessian using \eqref{eq: hessian_cases} in Thm.~\ref{thm: cbf_hessian}.
   \STATE Construct the HOCBF constraint \eqref{eq: hocbf_constr}.
   \ENDIF
  \STATE $\bm{u}^*\leftarrow$ Solve the min-norm CBF-QP \eqref{eq: cbf-min-norm} with reference $\bm{u}_{\mathrm{mpc}}$ and the corresponding CBF/HOCBF constraint.
  \STATE $\bm{x}\leftarrow$ Propagate the system dynamics using $\bm{u}^*$.
  \STATE $t \leftarrow t + \Delta t$, $k \leftarrow k+1$.
\ENDWHILE

\end{algorithmic}\label{alg1}
\end{algorithm}


\subsection{Low-level safety filter via Minkowski-CBF} \label{sec: M-cbf}
We first briefly review the Minkowski-CBF introduced in  \cite{Chen.etal.CDC25}, which is defined through the CO associated with a robot-obstacle pair in \mdspace. While this CBF admits both minimum-distance and penetration-depth cases, we focus here on the {\it minimum-distance} case, i.e., the collision-free case $\mathcal{R}\cap\mathcal{O}=\emptyset$ (equivalently $\bm{0}\notin\mathcal{O}^c$), since the robot is initialized in the safe set and the CBF guarantees forward invariance. In this case, the signed distance (sd) reduces to the minimum distance (dist), which can be computed via the following QP (referred to as a min-dist QP):
\begin{align}\begin{split}
   &\min\,\,\|\bm{z}\|^2_2\quad\text{s.t.}\,\,\,\,\mathbf{A}^c(\bm{x}) \bm{z}\leq \mathbf{b}^c(\bm{x}),\\
   &\,\text{with}\,\,\, 
   \text{dist}(\mathcal{R}(\bm{x}),\mathcal{O})=\text{dist}(\bm{0},\mathcal{O}^c(\bm{x})) =\|\bm{z}^*(\bm{x})\|_2,
   \label{eq: dist_c}
\end{split}\end{align}
where the minimizer $\bm{z}^*=\bm{z}^*(\bm{x})\in\mathbb{R}^d$, referred to as a critical point, is the projection of the origin $\bm{0}$ onto $\mathcal{O}^c$ in \mdspace. The Minkowski-CBF is 
\begin{align}\begin{split}   
    &h(\bm{x})=\text{dist}(\mathcal{R}(\bm{x}),\mathcal{O})-d_\text{safe}=\|\bm{z}^*(\bm{x})\|_2-d_\text{safe},
    \label{eq: min-dist-cbf}
\end{split}
\end{align}
where $d_\text{safe}>0$ is the user-defined safety margin. 
Although we retain $\mathbf{A}^c(\bm{x})$ and $\mathbf{b}^c(\bm{x})$ for generality, the CO representation depends directly only on the robot position $\bm{p}\in\mathbb{R}^d$ and orientation. In the pure-translation setting considered here, the orientation is fixed, so this dependence reduces to the position component $\bm{p}$ of the state $\bm{x}$.

For the QP in \eqref{eq: dist_c}, the objective function is $f_0(\bm{z})=\bm{z}^\top\bm{z}$, and the corresponding Lagrangian is
\begin{align}
    \mathcal{L}(\bm{z},\boldsymbol{\lambda},\bm{x})=f_o(\bm{z})+ \boldsymbol{\lambda}^\top \left(\mathbf{A}^c(\bm{x})\bm{z}-\mathbf{b}^c(\bm{x})\right),\label{eq: Lag}
\end{align}
where $\boldsymbol{\lambda}\in\mathbb{R}^{\ell_c}$ is the dual variable. The following theorem gives a closed-form expression for the gradient of the CBF with respect to the position $\bm{p}$.

\begin{thm}[CBF Gradient with respect to Position]
    \normalfont
    For $h(\bm{x})>0$, the gradient of the CBF $h$ in \eqref{eq: min-dist-cbf} with respect to the position $\bm{p}$ is:
    \begin{align}
        \frac{\partial h}{\partial \bm{p}} =-\frac{(\bm{z}^*)^\top}{\|\bm{z}^*\|_2}, \label{eq: cbf_grad}
    \end{align}
    i.e., the unit vector pointing from the critical point $\bm{z}^*$ toward the origin. \label{thm: cbf_grad}
\end{thm}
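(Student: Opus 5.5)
We plan to prove the gradient formula in Theorem~\ref{thm: cbf_grad} by combining the sensitivity result of Theorem~\ref{Thm:dfdx} with the specific pure-translation structure of the CO. Since $h=\|\bm{z}^*\|_2-d_\text{safe}$ and $h>0$ forces $\bm{z}^*\neq\bm{0}$, the chain rule gives
\[
\frac{\partial h}{\partial \bm{p}}=\frac{1}{2\|\bm{z}^*\|_2}\frac{\partial f_0(\bm{z}^*)}{\partial \bm{p}}, \qquad f_0(\bm{z}^*)=\|\bm{z}^*\|_2^2 .
\]
The differentiation is legitimate because $f_0(\bm{z}^*)>0$ is bounded away from zero. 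It therefore suffices to compute $\partial f_0(\bm{z}^*)/\partial\bm{p}$. By Theorem~\ref{Thm:dfdx}, this equals $\partial\mathcal{L}/\partial\bm{p}$ of the Lagrangian \eqref{eq: Lag}, evaluated at $(\bm{z}^*,\boldsymbol{\lambda}^*)$.

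The next step makes the $\bm{p}$-dependence of the CO explicit. Under pure translation we have $\mathcal{R}(\bm{p})=\mathcal{R}_0+\bm{p}$, so
\[
\mathcal{O}^c(\bm{p})=\mathcal{O}\oplus(-\mathcal{R}_0)-\bm{p}.
\]
In half-space form, $\mathbf{A}^c$ is therefore constant, and $\mathbf{b}^c(\bm{p})=\mathbf{b}^c_0-\mathbf{A}^c\bm{p}$. I would justify this by noting that translating a polytope by $-\bm{p}$ maps $\mathbf{A}\bm{y}\le\mathbf{b}$ to $\mathbf{A}(\bm{y}+\bm{p})\le\mathbf{b}$. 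With this form, $\partial\mathcal{L}/\partial\bm{p}=(\boldsymbol{\lambda}^*)^\top\mathbf{A}^c$.

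Stationarity in \eqref{eq: KKT_cond}, with $\mathbf{Q}=I$ and $\mathbf{p}=\bm{0}$, gives $2\bm{z}^*+(\mathbf{A}^c)^\top\boldsymbol{\lambda}^*=\bm{0}$. Hence $(\boldsymbol{\lambda}^*)^\top\mathbf{A}^c=-2(\bm{z}^*)^\top$. Substituting into the chain rule yields $\partial h/\partial\bm{p}=-(\bm{z}^*)^\top/\|\bm{z}^*\|_2$. Geometrically, this is the unit vector from $\bm{z}^*$ toward the origin.

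The main obstacle is the regularity needed to invoke Theorem~\ref{Thm:dfdx}. The optimal value must be differentiable, and the multiplier may be non-unique when LICQ fails, for example when $\bm{z}^*$ is a vertex of $\mathcal{O}^c$ with more than $d$ active facets, or in degenerate cases. I would sidestep this as follows. Since $\mathcal{O}^c(\bm{p})$ is a rigid translate of a fixed convex set $K$, we have $\text{dist}(\bm{0},K-\bm{p})=\text{dist}(\bm{p},K)$. The distance function to a closed convex set is differentiable outside the set, with gradient $(\bm{p}-\Pi_K(\bm{p}))/\|\bm{p}-\Pi_K(\bm{p})\|$. Moreover, $\Pi_K(\bm{p})-\bm{p}=\bm{z}^*$. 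This independently confirms the formula and shows that any valid multiplier gives the same product $(\boldsymbol{\lambda}^*)^\top\mathbf{A}^c$, so non-uniqueness of $\boldsymbol{\lambda}^*$ is harmless.
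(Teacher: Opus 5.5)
Your proposal is correct and follows essentially the same route as the paper: the chain rule through $h=\sqrt{f_0(\bm{z}^*)}-d_\text{safe}$, then Theorem~\ref{Thm:dfdx} to replace $\partial f_0(\bm{z}^*)/\partial\bm{p}$ by $\partial\mathcal{L}/\partial\bm{p}$, then the pure-translation identities $\partial_{\bm{p}}\mathbf{A}^c=\bm{0}$ and $\partial_{\bm{p}}\mathbf{b}^c=-\mathbf{A}^c$, and finally stationarity $(\boldsymbol{\lambda}^*)^\top\mathbf{A}^c=-2(\bm{z}^*)^\top$. Your closing argument via $\text{dist}(\bm{0},K-\bm{p})=\text{dist}(\bm{p},K)$ is a useful addition the paper only gestures at (through its LICQ footnote and the later remark on differentiability of the distance to a convex set), because it justifies the differentiability needed for Theorem~\ref{Thm:dfdx} independently of LICQ; the irrelevance of the multiplier choice already follows from stationarity alone.
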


\begin{proof}
    For $h(\bm{x})>0$, we have $h=\|\bm{z}^*\|_2-d_\text{safe}=\sqrt{f_0(\bm{z}^*)}-d_\text{safe}$.
    By complementary slackness, the inactive dual variables are zero, so the stationarity condition reduces on the active set $\mathcal{I}$ to 
    $2\bm{z}^*+\mathbf{A}^c_\mathcal{I}(\bm{p})^\top \boldsymbol{\lambda}_\mathcal{I}^*=0$,
    where the subscript $\mathcal{I}$ denotes the active constraint set. Applying the chain rule together with Theorem~\ref{Thm:dfdx}, using the Lagrangian in \eqref{eq: Lag}, yields:
    \begin{align}
        \frac{\partial h}{\partial \bm{p}} &=\frac{\partial h}{\partial f_0(\bm{z}^*)}\frac{\partial f_0(\bm{z}^*)}{\partial \bm{p}}=\frac{1}{2\|\bm{z}^*\|_2}\frac{\partial\mathcal{L}}{\partial \bm{p}}\bigg\rvert_{(\bm{z}^*,\boldsymbol{\lambda}_\mathcal{I}^*)}  \label{eq: thm+chain-rule}
    \end{align}
    In the pure-translation setting, we have $\partial_{\bm{p}}\mathbf{A}_\mathcal{I}^c=\bm{0},~\partial_{\bm{p}}\mathbf{b}_\mathcal{I}^c=-\mathbf{A}_\mathcal{I}^c$ \cite{Chen.etal.CDC25}. Hence, 
    \begin{align}
        \frac{\partial\mathcal{L}}{\partial \bm{p}}\bigg\rvert_{(\bm{z}^*,\boldsymbol{\lambda}^*_\mathcal{I})}= (\boldsymbol{\lambda}_\mathcal{I}^*)^\top
        \left[ \left( \partial_{\bm{p}}\mathbf{A}_\mathcal{I}^c\right)\bm{z}^*-\partial_{\bm{p}}\mathbf{b}_\mathcal{I}^c\right]
        = (\boldsymbol{\lambda}_\mathcal{I}^*)^\top\Aci \label{eq: dLdp}
    \end{align}
    Using the stationarity condition $(\boldsymbol{\lambda}_\mathcal{I}^*)^\top\mathbf{A}_\mathcal{I}^c=-2(\bm{z}^*)^\top$ and substitutes it into \eqref{eq: thm+chain-rule} and \eqref{eq: dLdp}, we obtain:
    \begin{align}
        \frac{\partial h}{\partial \bm{p}} =\frac{1}{2\|\bm{z}^*\|_2}\left(-2(\bm{z}^*)^\top\right)= - \frac{(\bm{z}^*)^\top}{\|\bm{z}^*\|_2},
    \end{align}
    which completes the proof.\qedhere
    
\end{proof}

\begin{remark}
In our prior work \cite{Chen.etal.CDC25}, the gradient was derived from the sensitivity of the optimal solution \(z^*(x)\) via the implicit function theorem (IFT) \cite{Amos.Kolter.ICML17}. Here, we instead use Theorem.~\ref{Thm:dfdx} as a tool to derive the closed-form gradient expression. These two derivations are essentially equivalent, but the present derivation simplifies the proof.
\end{remark}

\begin{remark}
    In the minimum-distance case, the critical point $\bm{z}^*$ is the projection of the origin onto the convex CO, and is therefore unique. Moreover, the distance to a closed convex set is continuously differentiable outside the set \cite{Poliquin.etal.AMS00}. Hence, in the pure-translation setting, the CBF $h(\bm{x})$ is continuously differentiable.
\end{remark}
 We denote the transpose of the CBF gradient with respect to the position \eqref{eq: cbf_grad} as the unit normal vector: 
\begin{align}
    \mathbf{n}:=\left(\frac{\partial h}{\partial \bm{p}}\right)^\top=-\frac{\bm{z}^*}{\|\bm{z}^*\|_2}\in\mathbb{R}^d
    \label{eq: norm_vec}
\end{align}
The resulting first-order CBF constraint with the control-affine system \eqref{eq: ctrl-affine} is as follows:
\begin{align}
    \mathbf{n}^\top f(\bm{x})+\mathbf{n}^\top g(\bm{x})\bm{u}+kh(\bm{p})\geq0.
    \label{eq: cbf_constr}
\end{align}


\subsection{Extension to HOCBF for double-integrator dynamics}
We now extend the Minkowski-CBF construction to HOCBF. Since the safety function depends only on the position component of the state, it has relative degree two with respect to the control input. Let the state be $\bm{x}=[\bm{p}^\top,\bm{v}^\top]^\top$, where $\bm{v}\in\mathbb{R}^d$ denotes the velocity component. Following Def.~\ref{def: hocbf}, we construct a HOCBF by defining $\psi_0(\bm{x}):=h(\bm{x})=\|\bm{z}^*\|_2-d_\text{safe}$, $\alpha_1(\psi_0(\bm{x})):=k_1\psi_0(\bm{x})$, and $\alpha_2(\psi_1(\bm{x})):=k_2\psi_1(\bm{x})$, where $k_1,k_2>0$ are hyper-parameters. Hereafter, we restrict ourselves to 2D $\mathcal{W}$-space ($d=2$).
The resulting HOCBF constraint \eqref{eq: def_hocbf_constr} in this case is:
\begin{align}\begin{split}
    &\bm{v}^\top H\bm{v}+\mathbf{n}^\top\bm{u}+(k_1+k_2)\mathbf{n}^\top\bm{v}+k_1k_2h(\bm{p})\geq 0,\label{eq: hocbf_constr}
\end{split}\end{align}
where $H:=\partial^2_{\bm{p}}h\in\mathbb{R}^{2\times 2}$ is the Hessian matrix of the CBF with respect to position. Computing $H$ is generally intractable; however, our Minkowski-based formulation admits an analytical expression via the KKT sensitivity of \eqref{eq: dist_c}, as presented in Theorem.~\ref{thm: cbf_hessian}.

\begin{thm}[CBF Hessian with respect to Position] \label{thm: cbf_hessian}
    \normalfont 
    For $h(\bm{x})>0$, the Hessian of the CBF $h$ in \eqref{eq: min-dist-cbf} with respect to the position $\bm{p}\in\mathbb{R}^2$ is given by:
    \begin{equation}
        H = \begin{cases} 
        \frac{1}{\|\bm{z}^*\|_2} (\mathbf{I} - \mathbf{n}\mathbf{n}^\top), & \text{if } \bm{z}^* \text{ lies at a vertex,} \\
        \mathbf{0}, & \text{if } \bm{z}^* \text{ lies on an edge,}
        \end{cases}
        \label{eq: hessian_cases}
    \end{equation}
    where $\mathbf{I}\in\mathbb{R}^{2\times 2}$ is the identity matrix and $\mathbf{n}$ is the unit vector defined in \eqref{eq: norm_vec}.
\end{thm}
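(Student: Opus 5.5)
Our plan is to differentiate the closed-form gradient of Theorem~\ref{thm: cbf_grad} once more with respect to $\bm{p}$. Since $\partial h/\partial\bm{p} = -(\bm{z}^*)^\top/\|\bm{z}^*\|_2$, the Hessian follows from the chain rule once we know the Jacobian $J := \partial \bm{z}^*/\partial \bm{p}\in\mathbb{R}^{2\times 2}$:
\begin{align*}
H = -\frac{1}{\|\bm{z}^*\|_2}\left(\mathbf{I} - \frac{\bm{z}^*(\bm{z}^*)^\top}{\|\bm{z}^*\|_2^2}\right) J = -\frac{1}{\|\bm{z}^*\|_2}(\mathbf{I}-\mathbf{n}\mathbf{n}^\top)J .
\end{align*}
Here we use that the derivative of $\bm{z}\mapsto \bm{z}/\|\bm{z}\|_2$ is $(\mathbf{I}-\bm{z}\bm{z}^\top/\|\bm{z}\|^2)/\|\bm{z}\|$, and that $\mathbf{n}\mathbf{n}^\top = \bm{z}^*(\bm{z}^*)^\top/\|\bm{z}^*\|_2^2$. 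The whole proof therefore reduces to computing $J$ from the KKT system \eqref{eq: KKT_cond} of the min-dist QP \eqref{eq: dist_c}. We restrict to the active set $\mathcal{I}$, assume LICQ and strict complementarity so that $\mathcal{I}$ is locally constant, and use $\partial_{\bm{p}}\mathbf{A}^c_\mathcal{I}=\bm{0}$ and $\partial_{\bm{p}}\mathbf{b}^c_\mathcal{I}=-\mathbf{A}^c_\mathcal{I}$ as in the proof of Theorem~\ref{thm: cbf_grad}.

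In the \emph{vertex case}, two edges are active in $\mathbb{R}^2$. Then $\mathbf{A}^c_\mathcal{I}\in\mathbb{R}^{2\times2}$ is invertible by LICQ, and the active constraints $\mathbf{A}^c_\mathcal{I}\bm{z}^*=\mathbf{b}^c_\mathcal{I}(\bm{p})$ pin $\bm{z}^*$ down completely. Differentiating gives $\mathbf{A}^c_\mathcal{I}J=-\mathbf{A}^c_\mathcal{I}$, so $J=-\mathbf{I}$. This matches geometry: translating the robot by $\delta\bm{p}$ translates the CO by $-\delta\bm{p}$, and the nearest vertex moves rigidly with it. Substituting yields $H=\frac{1}{\|\bm{z}^*\|_2}(\mathbf{I}-\mathbf{n}\mathbf{n}^\top)$.

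In the \emph{edge case}, a single constraint with row $\mathbf{a}^\top$ is active. Stationarity $2\bm{z}^*+\mathbf{a}\lambda=0$ shows that $\bm{z}^*$ is parallel to $\mathbf{a}$, so $\mathbf{n}\parallel\mathbf{a}$. Differentiating stationarity gives $2J+\mathbf{a}\,\partial_{\bm{p}}\lambda=0$, so every column of $J$ lies in $\mathrm{span}(\mathbf{a})=\mathrm{span}(\mathbf{n})$. Hence $(\mathbf{I}-\mathbf{n}\mathbf{n}^\top)J=\mathbf{0}$, and $H=\mathbf{0}$. There is no need to solve for $\partial_{\bm{p}}\lambda$ explicitly. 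The same conclusion also follows directly: on an edge, $h$ is an affine function of $\bm{p}$ (the distance to a fixed half-plane that translates), so its Hessian vanishes.

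The main obstacle is justifying differentiability of $\bm{z}^*$ with a locally constant active set. At degenerate configurations, where $\bm{z}^*$ is a vertex but one multiplier is zero (the boundary between vertex and edge regions), the two formulas disagree, and $h$ is only $C^1$ there, not $C^2$. We would handle this by stating the result on the open regions where strict complementarity holds, which makes the implicit function theorem applicable to the reduced KKT system. At the measure-zero switching points, either expression is a valid one-sided (generalized) Hessian.
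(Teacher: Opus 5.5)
Your proposal is correct and follows the paper's route: the same chain-rule identity $H=-\frac{1}{\|\bm{z}^*\|_2}(\mathbf{I}-\mathbf{n}\mathbf{n}^\top)\,\partial\bm{z}^*/\partial\bm{p}$, then an active-set KKT/implicit-function-theorem computation of $\partial\bm{z}^*/\partial\bm{p}$, split into the vertex and edge cases. The paper first derives the unified formula $\partial\bm{z}^*/\partial\bm{p}=-(\mathbf{A}^c_{\mathcal{I}})^\top[\mathbf{A}^c_{\mathcal{I}}(\mathbf{A}^c_{\mathcal{I}})^\top]^{-1}\mathbf{A}^c_{\mathcal{I}}$ via a Schur complement and then specializes it, whereas you read off $-\mathbf{I}$ directly from the active equalities and, in the edge case, use only that the Jacobian's columns lie in $\mathrm{span}(\mathbf{n})$, which is slightly leaner but equivalent.
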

\begin{proof}
    By definition, the Hessian matrix of $h$ with respect to $\bm{p}$ is $
        H=\partial_{\bm{p}}(\partial_{\bm{p}}h)=\partial_{\bm{p}}(\mathbf{n}^\top)$.
    Using \eqref{eq: norm_vec}, the quotient rule and
    \begin{align*}
        \frac{\partial\|\bm{z}^*\|_2}{\partial \bm{p}}=\frac{(\bm{z}^*)^\top}{\|\bm{z}^*\|_2}\frac{\partial\bm{z}^*}{\partial \bm{p}}=-\mathbf{n}^\top\frac{\partial\bm{z}^*}{\partial \bm{p}},
    \end{align*}
    we obtain the following fundamental expression:
    \begin{align}
        H = \frac{1}{\|\bm{z}^*\|_2} (\mathbf{I} - \mathbf{n}\mathbf{n}^\top) \left( -\frac{\partial \bm{z}^*}{\partial \bm{p}} \right). \label{eq: hessian_g}
    \end{align} 
    To evaluate the last term of $\partial \bm{z}^*/\partial \bm{p}$, we perform a local sensitivity analysis on the KKT conditions of the min-dist QP in \eqref{eq: dist_c} using the IFT, restricted to regions where the active feature remains unchanged. Let $\bm{\xi}:=(\bm{z},\boldsymbol{\lambda})$ and $\Gamma$ denote the KKT system in \eqref{eq: KKT_cond} with $\mathbf{Q}=2\mathbf{I},~\mathbf{p}=\bm{0},~\mathbf{G}=\mathbf{A}^c,~\mathbf{h}=\mathbf{b}^c$. Considering only active constraints and recalling $\partial_{\bm{p}}\mathbf{A}_\mathcal{I}^c=\bm{0},~\partial_{\bm{p}}\mathbf{b}_\mathcal{I}^c=-\mathbf{A}_\mathcal{I}^c$, we obtain the sensitivity equation:
    \begin{align*}
    &\partial_{\bm{p}}(\bm{\xi}^*) = -\left[ \partial_{\bm{\xi}}\Gamma(\bm{\xi}^*)\right]^{-1} \partial_{\bm{p}}\Gamma(\bm{\xi}^*),\\[0.2em]
    &\partial_{\bm{\xi}}\Gamma =
    \begin{bmatrix}
    2\mathbf{I} & \left(\Aci\right)^{\top} \\D(\lambdai^*)\Aci & \bm{0}
    \end{bmatrix},\,\,
    \partial_{\bm{p}}\Gamma =
    \begin{bmatrix}
    \bm{0} \\D(\lambdai^*)\left( \Aci \right)
    \end{bmatrix},
\end{align*}
The inverse matrix $\left[ \partial_{\bm{\xi}}\Gamma(\bm{\xi}^*)\right]^{-1}$ is computed using the Schur complement. Since the upper block of $\partial_{\bm{p}}\Gamma$ is $\bm{0}$, we only need the top-right block of $\left[ \partial_{\bm{\xi}}\Gamma(\bm{\xi}^*)\right]^{-1}$ for $\partial\bm{z}^*/\partial \bm{p}$, which analytically evaluates to $(\Aci)^\top\left[\Aci(\Aci)^\top\right]^{-1}\left[ D(\lambdai^*)\right]^{-1}$. Multiplying this block by the lower component of $\partial_{\bm{p}}\Gamma$, the diagonal matrices exactly cancel out ($\left[ D(\lambdai^*)\right]^{-1}D(\lambdai^*)=\mathbf{I}$) under strict complementarity, yielding the sensitivity:
\begin{align}
    \frac{\partial \bm{z}^*}{\partial \bm{p}}=-\left(\Aci\right)^\top\left[\Aci\left(\Aci\right)^\top\right]^{-1}\Aci \label{eq: dz*dp}
\end{align}

\begin{itemize}[leftmargin=*]
    \item \textbf{Case~1~(Vertex):} When $\bm{z}^*$ lies at a vertex of the CO, the active set consists of two linearly independent constraints. Consequently, the active constraint matrix $\Aci\in\mathbb{R}^{2\times 2}$ is an invertible square matrix, so the sensitivity formula in \eqref{eq: dz*dp} simplifies to: $\partial \bm{z}^*/\partial \bm{p}=-\mathbf{I}$.
    Substituting this into \eqref{eq: hessian_g} yields $H = (\mathbf{I} - \mathbf{n}\mathbf{n}^\top)/\|\bm{z}^*\|_2$.
    \item \textbf{Case~2~(Edge):} When $z^*$ lies on an edge of the CO, the active set reduces to a single constraint, denoted as $\mathbf{a}:=\left(\Aci\right)^\top\in\mathbb{R}^2$. Substituting this into \eqref{eq: dz*dp} yields $-\mathbf{a}\left(\mathbf{a}^\top\mathbf{a}\right)^{-1}\mathbf{a}^\top=-\mathbf{a}\mathbf{a}^\top/\|\mathbf{a}\|_2^2$.Based on the KKT stationarity condition $2\bm{z}^*+\lambdai^*\mathbf{a}=\bm{0}$, the vector $\mathbf{a}$ is collinear with $\bm{z}^*$. Since $\mathbf{n}$ defined in \eqref{eq: norm_vec} is also a unit vector collinear with $\bm{z}^*$, the term $\mathbf{a}\mathbf{a}^\top/\|\mathbf{a}\|_2^2$ is equivalent to $\mathbf{n}\mathbf{n}^\top$. Thus, $\partial \bm{z}^*/\partial \bm{p}=-\mathbf{n}\mathbf{n}^\top$. Substituting this back into \eqref{eq: hessian_g} yields $H =  (\mathbf{I} - \mathbf{n}\mathbf{n}^\top)(\mathbf{n}\mathbf{n}^\top)/\|\bm{z}^*\|_2 = \mathbf{0}$, since $\mathbf{n}^\top\mathbf{n} = 1$. This completes the proof. \qedhere
\end{itemize}
\end{proof}
\begin{remark}
    Theorem~\ref{thm: cbf_hessian} applies away from active-set switching boundaries. At such boundaries, the Hessian may fail to exist. Since these boundaries form a measure-zero set in $\mathbb{R}^2$, the Hessian exists almost everywhere.
\end{remark}
Intuitively, the Hessian captures the curvature of the distance field, yielding a zero matrix for straight level sets when $\bm{z}^*$ is on an edge, and a non-zero matrix for concentric circular level sets when $\bm{z}^*$ is at a vertex. With the above closed-form Hessian \eqref{eq: hessian_cases}, the HOCBF constraint \eqref{eq: hocbf_constr} can be obtained within each local active-feature region. We next validate the resulting hierarchical MILP-MPC-CBF framework in simulation.

\section{Simulation Results}
Throughout the simulations, we consider a triangular robot navigating toward a goal point $\bm{x}_g$ 
while strictly avoiding polygonal obstacles. We evaluate our proposed framework in two distinct scenarios: a complex maze environment to demonstrate long-horizon guidance, and a specifically designed U-shaped environment for comparative evaluation against baseline methods, i.e., MILP-MPC only planner and purely reactive CLF-CBF-QP. (See video at \textnormal{\url{https://youtu.be/-cm4cQ_WXDY}} .)

All simulations are conducted in MATLAB on a PC with an Intel i7-8700 3.2 GHz 6-core CPU and 32 GB RAM. The MILP-MPC FHOCPs 
\eqref{eq: MILP-MPC} are solved using \texttt{intlinprog}, while the min-dist \eqref{eq: dist_c} and CBF-filtering QPs \eqref{eq: cbf-min-norm} via \texttt{quadprog}. The closed-loop system and the low-level QPs are both updated at 100 Hz. The high-level planner uses a discretization step of $\Delta t_p=0.2\,\mathrm{s}$ and a prediction horizon of $N_p=10$, yielding a $T_p=2\,\mathrm{s}$ look-ahead. The algorithm parameters are set as follows: $M=20,~ \varepsilon_{\mathrm{obs}}=0.01,~ \alpha=20,~\beta=0.08$ for MILP-MPC; $k=3$ for the CBF; $k_1=2,~k_2=10$ for the HOCBF, $d_{\mathrm{safe}}=0$ for safety margin; and control bounds $\bm{u}_\mathrm{max}=-\bm{u}_\mathrm{min}=5$.
\subsection{System Dynamics}
\begin{itemize}[leftmargin=*]
    \item \textbf{Case~1~(Single-integrator):} The system state is the position $\bm{x} = \bm{p} = [x,y]^\top \in \mathbb{R}^2$, with dynamics governed by $\dot{\bm{x}} = \bm{u} = [u_1, u_2]^\top$, where $\bm{u}$ represents the linear velocity control inputs. These dynamics can be directly expressed in the control-affine form of \eqref{eq: ctrl-affine} by defining the drift vector as $f(\bm{x})=\bm{0}_{2\times 1}$ and the control matrix as $g(\bm{x})=\mathbf{I}_2$. 
    
    \item \textbf{Case~2~(Double-integrator):} The state space is expanded to include velocity, i.e., $\bm{x} = [\bm{p}^\top, \bm{v}^\top]^\top \in \mathbb{R}^4$. The corresponding dynamics are $\dot{\bm{p}} = \bm{v}$ and $\dot{\bm{v}} = \bm{u}$, where the control input $\bm{u} \in \mathbb{R}^2$ dictates the planar acceleration. In the control-affine form of \eqref{eq: ctrl-affine}, this translates to $f(\bm{x}) = [\bm{v}^\top, 0, 0]^\top$ and $g(\bm{x}) = [\bm{0}_{2\times 2}; \mathbf{I}_2]$.
\end{itemize}

\subsection{Long-Horizon Predictive Guidance}
The primary motivation for integrating MILP-MPC with CBFs is to leverage look-ahead predictions to improve global navigational capabilities, allowing the system to navigate without requiring predefined waypoints or a reference path from an external global planner. To maintain computational tractability, our framework adopts a hierarchical architecture: the high-level MILP-MPC executes at a low rate of 5 Hz using a point-mass model to generate a nominal reference path. The low-level Minkowski-CBF operates at a high rate of 100 Hz to account for the robot's full geometric shape, acting as a real-time safety filter that minimally adjusts the nominal control input to guarantee strict safety.
\begin{figure}[htb!]
	\centering
    \includegraphics[width=\linewidth]{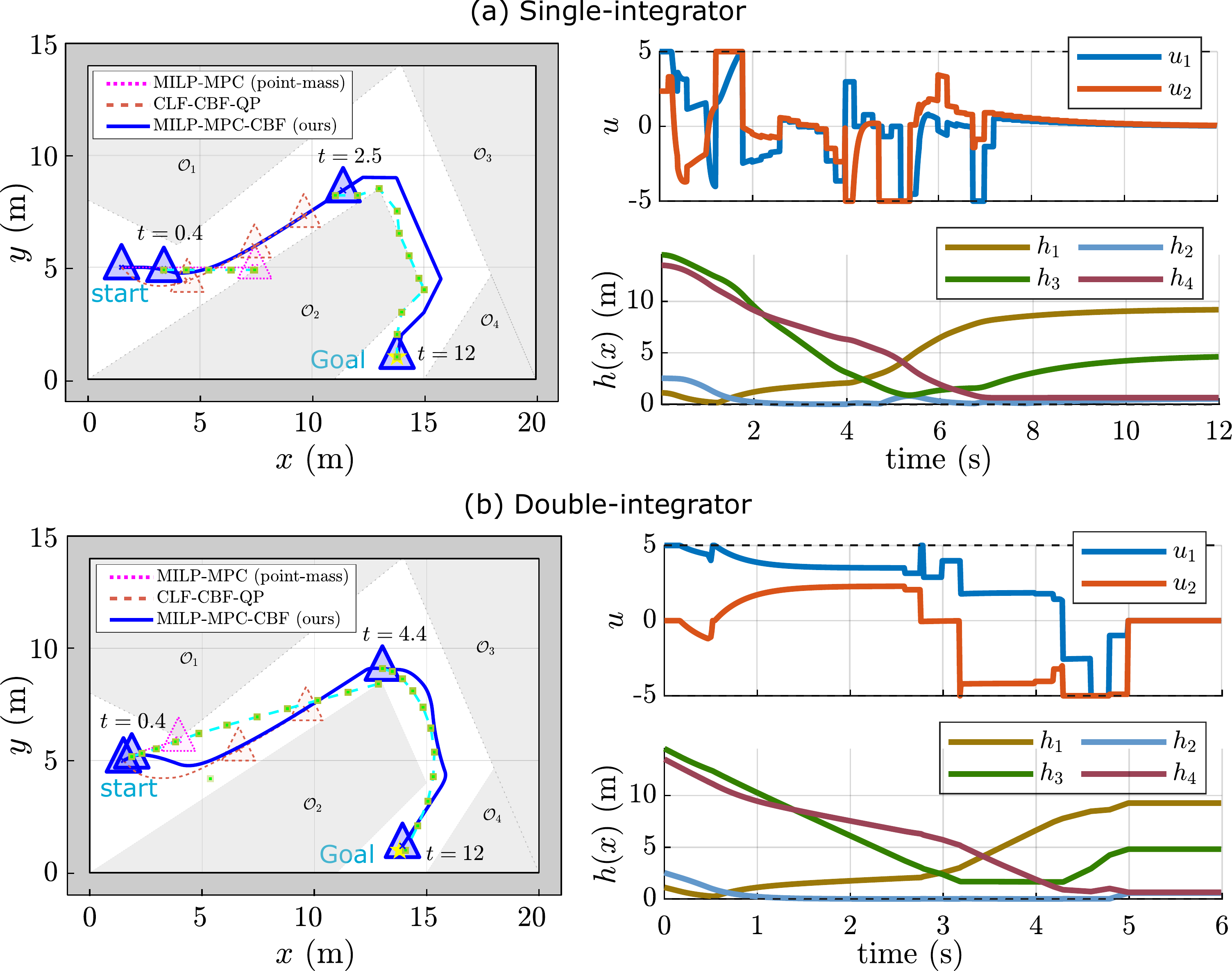}
	\caption{Long-horizon nominal predictions (dashed cyan) generated by the high-level MILP-MPC planner under (a) single- (\textbf{Top}) and (b) double-integrator (\textbf{Bottom}) dynamics, navigating the robot to the goal. \textbf{Left: } The actual closed-loop trajectory (solid blue) tracks the MPC predicted states (brown squares) as the robot (blue) avoids obstacles (gray). \textbf{Right: } Control inputs $\bm{u}$ and CBF values, verifying that strict safety $h(\bm{x})\geq0$ is guaranteed with respect to the real robot's geometry over time.
    }\label{fig: maze}
\end{figure}

To demonstrate this predictive capability, we consider the maze navigation scenario in Fig.~\ref{fig: maze}, initializing the robot at $\bm{x}_s=(1.5,5)$ to reach the goal $\bm{x}_g=(13.8,1)$. Due to the $T_p=2\,\mathrm{s}$ look-ahead horizon, the MILP-MPC anticipates the protruding vertex of the obstacle 2. It generates a nominal trajectory that guides the robot upward---temporarily moving away from the goal---to safely bypass the corner. In contrast, a purely reactive CLF-CBF-QP  ($V(\bm{x})=||\bm{x}-\bm{x}_g||^2_2$) controller greedily minimizes the distance to the goal based solely on instantaneous local geometry. This myopic approach inevitably traps the robot in a local minimum along the slanted edge, as shown in Fig.~\ref{fig: teaser}.

\begin{figure*}[t!]
    \includegraphics[width=0.98\textwidth]{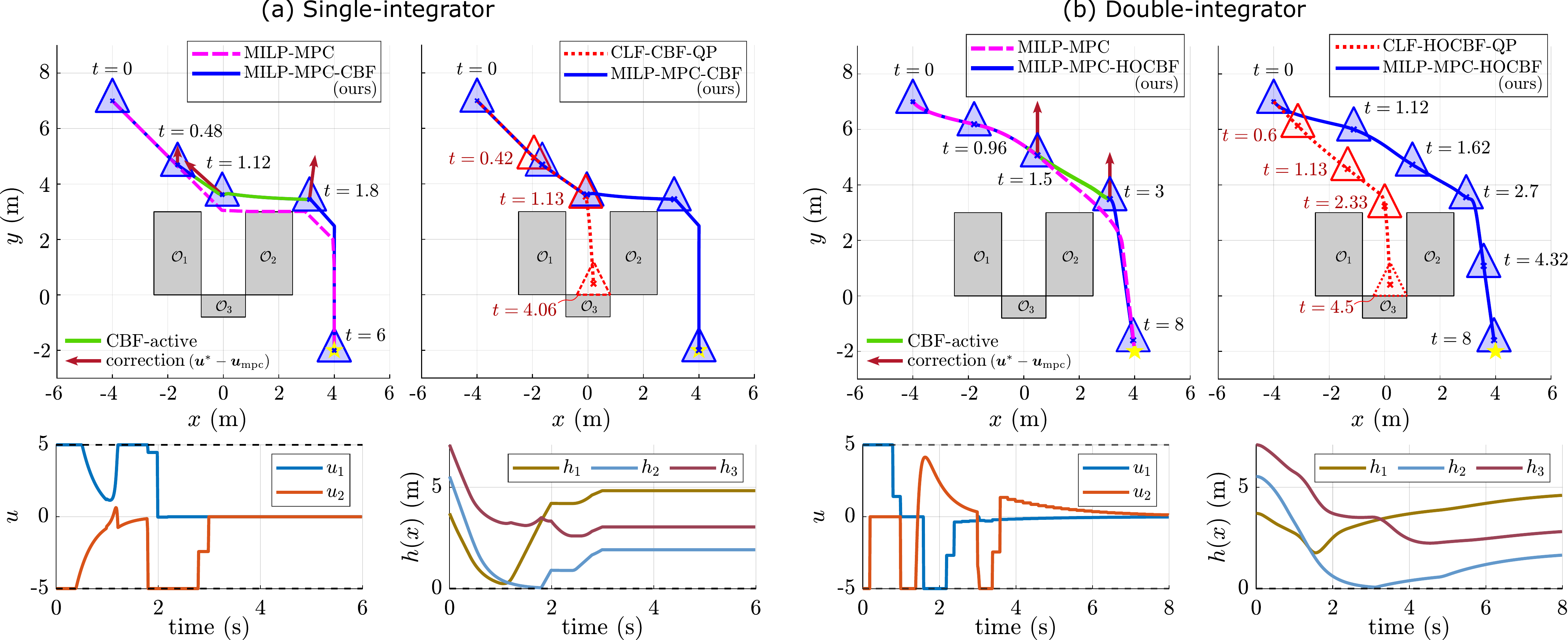}
    \caption{\textbf{Top-row: } Comparison of the proposed MILP-MPC-CBF framework in a U-shaped environment under single- (\textbf{Left} column) and double-integrator (\textbf{Right} column) dynamics. In each case, the left compares the proposed framework with MILP-MPC only, where the green segment indicates the interval during which the geometry-aware Minkowski-(HO)CBF safety filter is active, while the right compares it with the reactive CLF-(HO)CBF-QP. \textbf{Bottom-row: } Time histories of control inputs $\bm{u}$ and CBFs $h$ for MILP-MPC-CBF, showing that the full-geometry robot remains in the safe set throughout the navigation.
    }\label{fig: u-trap}
\end{figure*}

\subsection{Comparative Evaluation}
To highlight the usefulness of our hierarchical design,  we evaluate the frameworks in a U-shaped trap, see Fig.~\ref{fig: u-trap}. We first demonstrate the crucial role of the low-level geometry-aware safety filter by comparing it against a standalone MILP-MPC planner (dashed magenta). Because the high-level planner relies on a point-mass model and enforces safety only at discrete time steps, its optimal trajectory tightly traces the obstacle boundaries. As illustrated in the left subfigures of the top row of Fig.~\ref{fig: u-trap}, our Minkowski-CBF activates whenever the nominal control becomes unsafe (green segments). It applies minimally invasive real-time corrections (dark red arrows) to repel the full-geometry robot from the obstacle, ensuring strict safety while preserving the long-horizon guidance dictated by the MILP-MPC.

Next, as shown in the right subfigures of the top row of Fig.~\ref{fig: u-trap}, we compare our MILP-MPC-CBF (solid blue) against the purely reactive CLF-CBF-QP baseline (dotted red). Lacking a predictive horizon, the baseline is driven directly into the concavity and deadlocks at zero velocity due to conflicting goal-seeking and safety objectives. In contrast, our approach proactively bypasses this trap by tracking the long-horizon nominal path.


\subsection{Discussion and Limitations}
While our MILP-MPC-CBF framework has demonstrated improved navigational performance compared to purely reactive methods, certain limitations remain. Given a sufficiently 
long prediction horizon, it can effectively resolve topology-induced local minima. However, the hierarchical design introduces a model mismatch between the high-level point-mass planner and the low-level full-geometry safety filter. This mismatch becomes non-negligible when the robot's actual shape deviates significantly from the simplified geometry models. In such cases, the safety filter will override the nominal guidance, causing the system to be in a geometry-induced local minimum. Resolving geometry-induced deadlock caused by planner-filter mismatch remains an open problem. 


\section{Conclusions and Future Work}
We presented a hierarchical navigation framework for geometry-aware safe navigation of polytopic robots in complex polytopic environments. By deriving the closed-form gradient and Hessian of the Minkowski-CBF, we enabled a HOCBF formulation for double-integrator dynamics. Simulations demonstrate that our approach effectively resolves topology-induced local minima of purely reactive CLF-CBF-QP controllers and compensates for the actual physical volume neglected by point-mass abstractions to guarantee safety for the robot's full geometry. Future work will extend this framework to orientation-varying robots, nonlinear dynamics, and incorporate predicted states into the CBF to systematically resolve geometry-induced deadlocks.
\section{Acknowledgment}
The authors used generative AI tools \cite{google_gemini31pro_2026,openai_chatgpt_2025} for grammar polishing, manuscript organization, and limited clarification of derivation details. All research ideas, technical content, and analysis were developed by the authors.


\bibliographystyle{IEEEtran}
\bibliography{references}

\end{document}